\newif\ifcomments
\definecolor{dark-blue}{rgb}{0,0,0.7}  % added by Johannes: colorlinks for references
\newcommand{\coloursm}{blue}
\newcommand{\colourjz}{orange}
\newcommand{\colourtc}{cyan}
\newcommand{\colourdc}{purple}
\newcommand{\todocustom}[3]{\todo[linecolor=#2,backgroundcolor=#2!25,bordercolor=#2,#3]{#1}}
\newcommand{\removehide}[1]{}
\newcommand{\generalComment}[3]{\todocustom{{\bf #2}: #3}{#1}{inline,size=\small,caption={}}}  % todonotes to differentiate comments and text
\newcommandx{\nbsm}[2][1=]{\todocustom{#2}{\coloursm}{#1}}
\newcommandx{\nbdc}[2][1=]{\todocustom{#2}{\colourdc}{#1}}
\newcommandx{\nbjz}[2][1=]{\todocustom{#2}{\colourjz}{#1}}
\newcommandx{\nbtc}[2][1=]{\todocustom{#2}{\colourtc}{#1}}
\newcommand{\commentsm}[1]{\textcolor{orange}{({\bf SM:} #1)}}
\newcommand{\commentsmbox}[1]{\generalComment{\coloursm}{SM}{#1}}
\newcommand{\commentjz}[1]{\generalComment{\colourjz}{JZ}{#1}}
\newcommand{\commenttc}[1]{\generalComment{\colourtc}{TC}{#1}}
\newcommand{\commentdc}[1]{\generalComment{\colourdc}{DC}{#1}}
\newcommand{\commentsm}[1]{}
\newcommand{\commentsmbox}[1]{}
\newcommand{\commentjz}[1]{}
\newcommand{\commenttc}[1]{}
\newcommand{\commentdc}[1]{}
\newcommandx{\nbsm}[2][1=]{}
\newcommandx{\nbdc}[2][1=]{}
\newcommandx{\nbjz}[2][1=]{}
\newcommandx{\nbtc}[2][1=]{}
\renewcommand{\todo}[1]{}
\renewcommandx{\nbsm}[2][1=]{}
\renewcommandx{\nbdc}[2][1=]{}
\renewcommandx{\nbjz}[2][1=]{}
\renewcommandx{\nbtc}[2][1=]{}
\newcommand{\citet}[1]{\cite{#1}}
\newcommand{\authorSep}{\quad}
\title{
  Language Models For Generalised PDDL Planning:\\Synthesising Sound and Programmatic Policies
}
\author{
  Dillon Z. Chen\textsuperscript{1,2,3} \authorSep
  Johannes Zenn\textsuperscript{1} \authorSep
  Tristan Cinquin\textsuperscript{1}\authorSep
  Sheila A. McIlraith\textsuperscript{1,3}
  \\
  $^{1}${Vector Institute} \qquad
  $^{2}${LAAS-CNRS, University of Toulouse} \qquad
  $^{3}${University of Toronto}
}
\newcommand{\rlapTableSize}{
  \small
}
\newcommand{\algFontSize}{
  \small
}
\newcommand{\covTableSize}{
  \footnotesize
}
\newcommand{\ssection}[1]{\textbf{#1}}
\begin{document}

\maketitle  % Make the title section

\begin{abstract}
  We study the usage of language models (LMs) for planning over world models specified in the Planning Domain Definition Language (PDDL). We prompt LMs to generate Python programs that serve as generalised policies for solving PDDL problems from a given domain. Notably, our approach synthesises policies that are provably sound relative to the PDDL domain without reliance on external verifiers. We conduct experiments on competition benchmarks which show that our policies can solve more PDDL problems than PDDL planners and recent LM approaches within a fixed time and memory constraint. Our approach manifests in the \lmplan{} planner which can solve planning problems with several hundreds of relevant objects. Surprisingly, we observe that LMs used in our framework sometimes plan more effectively over PDDL problems written in meaningless symbols in place of natural language; e.g. rewriting \pddl{(at dog kitchen)} as \pddl{(p2 o1 o3)}. This finding challenges hypotheses that LMs reason over word semantics and memorise solutions from its training corpus, and is worth further exploration.
\end{abstract}

%%%%%%%%%%%%%%%%%%%%%%%%%%%%%%%%%%%%%%%%%%%%%%%%%%%%%%%%%%%%%%%%
%% Paper
%%%%%%%%%%%%%%%%%%%%%%%%%%%%%%%%%%%%%%%%%%%%%%%%%%%%%%%%%%%%%%%%

% !TEX root =llmplan.tex

\section{Introduction}
\label{sec:introduction}
%
% AI Automated planning (AP)~\cite{ghallab.etal.2004,geffner.bonet.2013} refers to the class of sequential decision making problems formally specified by symbolic models.
% The term \emph{symbolic} refers to the fact that such models derive semantics from abstract, logical rules, in contrast to semantics derived from sensory data, such as vision and sound, or natural language understandable to humans. \commentsm{This is not quite true. Symbolic systems can include observations. I would just leave this out.}
% For example, a PDDL model \commentsm{The abstract must standalone from the body of the paper. As such, the introduction of the PDDL acronym together with a citation should be in the main paper}containing the fact \pddl{(at dog kitchen)} can be rewritten and exhibit the fact \pddl{(p2 o1 o3)} yet preserve the same set of solutions.
% \tc{I find the use of the word fact slightly confusing. Can we a more precise word ex. action, predicates or logical facts?}\sm{I agree. I would leave all mention of "symbolic" out of this first paragraph. It is not important at this juncture to describe in such detail. What follows is an alternative first paragraph. (Wait until we're closer to the experiment for which this is relevant.} \nbsm{"The symbol system described by Newell and Simon (1976) is seen as a computer program capable of manipulating entities called symbols, 'physical patterns' combined in expressions, which can be created, modified or destroyed by syntactic processes."}

\emph{AI automated planning} (AP)~\cite{ghallab.etal.2004,geffner.bonet.2013} refers to the class of sequential decision-making problems formally specified by symbolic models in specification languages such as the Planning Domain Definition Language (PDDL)~\cite{mcdermott.etal.1998,haslum.etal.2019}, and typically solved using heuristic search techniques.
Although AP is intractable~\cite{chapman.1987,bylander.1994,erol.etal.1995}, agents are often tasked with tractable families of planning problems---problems that share a common set of actions, transition system, and typed objects, and whose structural properties can be exploited to construct plans or policies that solve these families of problems.
For example, interplanetary rovers must continuously explore and collect information about planets with minimal communication and time for activity planning~\cite{bresina.etal.2005}, while logistics companies such as UPS$^{\text{TM}}$ ship and deliver packages and scale to over 20 million packages every day across 200 countries and territories in 2024~\cite{ups.2025}. 
In such applications, there is much need for \emph{planning efficiently at scale} to solve time-intensive problems, and \emph{automated plan synthesis}\nbsm{the mention of program synthesis seems to come out of the blue. Do you want to say "plan synthesis" instead?}\nbdc{Originally yes to fit the theme of the workshop. But now that you point it out, it reads better with program $\ra$ plan since we introduce programs in the next sentence.} to minimise dependency on human intervention.
Generalised planning (GP) exactly encapsulates this problem of automatically generating plans as programs that address families of related planning problems~\cite{levesque.2005,srivastava.etal.2008,srivastava.etal.2011a,bonet.etal.2009,hu.degiacomo.2011,bonet.etal.2019,illanes.mcilraith.2019,celorrio.etal.2019,cui.etal.2021,frances.etal.2021,drexler.etal.2022}.%\sm{Can we cite Illanes+MCI?  Any other Geffner-Bonet papers we shoudl cite? DeGiacomo?}%\dc{I cited more papers downstairs, space issues}

Recently, immense progress in the development of language models (LMs)~\cite{brown.etal.2020,chowdhery.etal.2023} has revealed strong emergent capabilities in reasoning and problem solving~\cite{kojima.etal.2022,wei.etal.2022}.
This progress has given rise to significant advances across various areas of artificial intelligence and problem solving, including algorithmic discovery~\cite{romeraparedes.etal.2024,alexandernovikov.balog.2025}, code generation~\cite{chen.etal.2021,nijkamp.etal.2023,liu.etal.2024}, competitive programming~\cite{openai.2025}, and embodied intelligence~\cite{driess.etal.2023,intelligence.etal.2025}.
However, several studies have suggested that current LMs are incapable of solving long-horizon planning problems~\cite{valmeekam.etal.2023,valmeekam.etal.2024}.

In this work, we leverage LMs to synthesise Python programs as solutions to GP problems. %\nbsm{I guess this misses the synthesis of value functions. I like this way, but we can change back to "for"}
Similarly to prior work in LMs for GP~\cite{silver.etal.2024}, we encode GP problems in PDDL, the de facto specification for symbolic AP problems.
We assume that the PDDL models are given but they can be synthesised with minimal human intervention from unstructured data such as natural language~\cite{collins.etal.2022,lin.etal.2023,guan.etal.2023,oswald.etal.2024,huang.etal.2025,tantakoun.etal.2025}, images~\cite{xi.etal.2024,athalye.etal.2025}, and environment interactions~\cite{verma.etal.2021,verma.etal.2022,silver.etal.2023,sreedharan.katz.2023,liang.etal.2025}.

We study LM-generated programs as (1) value functions following~\citet{correa.etal.2025}, and (2) policies for solving GP problems represented in PDDL.
We use LMs to generate Python programs implementing value functions to guide heuristic search, and policies as reactive controllers that specify an action to take from a set of applicable actions in a given state.
Notably, we guarantee that the synthesised policies are sound (any returned solution is correct in relation to the PDDL domain theory) by restricting them to predict actions that are only applicable at the input state, and furthermore to improve search performance.
Our approach manifests in an LM planner that outperforms state-of-the-art planners on total number of problems solved within a given time and memory limit from the recent International Planning Competition Learning Track~\cite{taitler.etal.2024}.
We further study the effect of symbolic and semantic representations of planning problems for LMs used in our framework.
We observe that surprisingly and in contrast to observations in previous works~\cite{valmeekam.etal.2023a,silver.etal.2024}, LMs used in our framework can sometimes match or even perform better on planning problems encoded using meaningless symbols (e.g., \pddl{o1} for \pddl{dog} or \pddl{p2} for \pddl{at}).
This is a provocative finding worth further exploration since it could suggest that the LM has learned to do some form of symbolic planning or reasoning.
Our contributions are summarised as follows.

\begin{figure}[t]
  \centering
  \includegraphics[width=.9\textwidth]{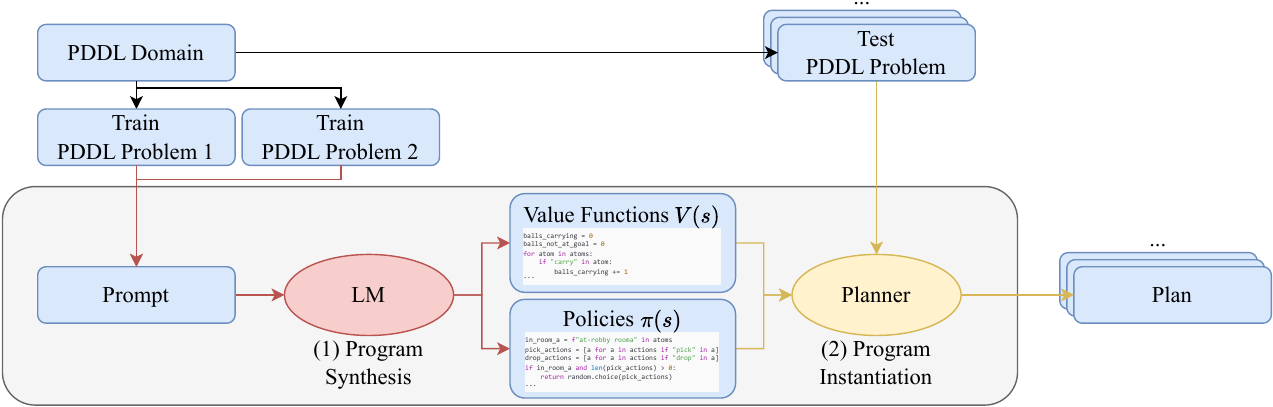}
  \caption{
    Pipeline for planning with LM-generated value functions and policies.
    The architecture is enclosed in gray.
    (1) A domain, 2 example PDDL problems, and a prompt is input into the LM which outputs a Python program representing a value function or policy.
    (2) The program is then used to help plan for PDDL problems from the same domain.
    See \Cref{sec:method} for more details.
  }
  \label{fig:pipeline}
\end{figure}

\begin{itemize}[leftmargin=*]
  \item We use LMs to generate code implementing \emph{sound} policies and compare its performance to generating value functions used in heuristic search for planning, with results generally favouring the former approach.
    We further use LM-generated policies to improve heuristic search performance.
  \item We conduct experiments demonstrating that LMs used in our framework can sometimes plan better on PDDL problems written with only symbols in place of natural language.
    This observation challenges hypotheses from previous works that LMs can only plan with languages with semantic meaning and memorise solutions from their training corpus.
\end{itemize}

% !TEX root =llmplan.tex

\section{Preliminaries: Classical Planning, PDDL, and Generalised Planning}
\label{sec:background}

In this section we introduce the necessary background on AP, followed by the problem we tackle in this paper: generalised planning.
AP refers to the class of sequential decision-making problems using models represented in formal, symbolic languages and the methodologies for solving them.
We begin by introducing the abstract classical planning problem model, a special case of AP problems where the objective is to find a sequence of actions that transitions an agent from an initial state to a goal state under a fully-observable, deterministic transition system.
Although probabilistic extensions of planning exist~\cite{bertsekas.tsitsiklis.1991,sanner.2010,mausam.kolobov.2012}, we focus our attention on classical planning for ease of presentation.
Following this, we provide an informal introduction to PDDL for encoding planning problems.
We then introduce heuristic search as a state-of-the-art method for solving planning problems and how they can be viewed analogously to value functions in reinforcement learning (RL).
For those familiar with RL, \Cref{tab:analogies} draws analogies between common RL and AP methodologies.
We conclude this section by formalising the GP problem which is concerned with synthesising programmatic plans for solving families of related planning problems.

\begin{table}[t]
  \centering
  \caption{Analogies between RL and AP methodologies.}
  \label{tab:analogies}
  \rlapTableSize
  \begin{tabularx}{\textwidth}{l l l}
    \toprule
    Methodology & Reinforcement Learning (RL) & AI Planning (AP) \\
    \midrule
    Action space & Policy approximation & Policy synthesis \ref{ssec:policies} \\
    State space & Value function approximation & Heuristic search \ref{ssec:heuristics} \\
    State \& action & Actor-critic algorithms & Heuristic search w. preferred operators \ref{ssec:both} \\
    \bottomrule
  \end{tabularx}
\end{table}

\ssection{Planning problem}
A classical planning problem is a deterministic state transition model concerned with driving a given initial state into a goal state via a sequence of actions.
% \remove{A classical planning problem is a deterministic state transition model concerned with driving a given initial state into a goal state via a sequence of actions.}\alt{A classical planning problem comprises a deterministic state transition model, an initial state and a goal condition. A solution to this problem is a sequence of actions which when executed starting in the initial state, terminates in a state the satisfies the goal condition.}
Following the notation in \cite{geffner.bonet.2013}, a \defn{(classical) planning problem} is a tuple $\problem = \gen{\states, \actions, \transition, \goals, \init, \cost}$ where
$\states$ is a set of states,
$\actions$ is a set of actions,
$\transition : \states \times \actions \ra \states \cup \set{\bot}$ is a \defn{deterministic transition function} where $\transition(s,a) = \bot$ represents that the action $a$ is not applicable in the state $s$,
$\goals \subseteq \states$ is a non-empty set of goal states,
$\init \in \states$ is the initial state, and
$\cost: \states \times \actions \ra \R_{\geq 0}$ is a cost function.
The set of \emph{applicable actions} of a state $s$ is defined by $A(s) = \set{a \mid \transition(s, a) \not= \bot} \subseteq A$ and the \emph{successors} of $s$ is defined by $\set{\transition(s, a) \mid a \in A(s)} \subseteq S$.
A solution or \emph{plan} $\plan$ for a planning problem is a finite sequence of actions $a_0, \ldots, a_n$ such that $\transition(s_i, a_i) = s_{i+1} \not= \bot$ for $i = 0, \ldots, n$ where $s_0 = \init$ and $s_{n+1} \in \goals$.
In this paper, we focus on \emph{satisficing planning} which refers to the problem of finding satisficing solutions for $\problem$, i.e. any plan for the problem.

\ssection{Planning representations: PDDL}\label{sssec:pddl}
Planning problems are often represented in a first-order formal language, the most common being PDDL.
Details of the PDDL syntax are not necessary for the understanding of the paper but one should note that fragments of PDDL planning range from being EXPSPACE-hard~\cite{erol.etal.1995} to undecidable~\cite{helmert.2002}.
This is because PDDL provides compact encodings of transition models that can be exponential in the size of the input files or greater.
A planning problem is represented by a PDDL \emph{domain}, providing a compact encoding of the transition function in terms of a set of object types, predicates, numeric functions, and actions, and a PDDL \emph{problem}, specifying a finite set of objects, an initial state, and a goal condition.
For example, a package delivery {\bf domain} can contain
\begin{itemize}[noitemsep,leftmargin=*]
  \item the types \pddl{object}, \pddl{vehicle}, \pddl{location}, \pddl{package},
  \item the predicates \pddl{(at ?o - object ?l - location)}, \pddl{(in ?p - package ?l - location)},
  \item the numeric functions \pddl{(capacity ?v - vehicle)}, \pddl{(weight ?p - package)}, and
  \item the following action schema, among others, for loading a package into a vehicle
\end{itemize}

\begin{lstlisting}[
    basicstyle=\ttfamily\scriptsize,
    breaklines=true,
]
(:action pick-up
 :parameters (?v - vehicle ?l - location ?p - package)
 :precondition (and (at ?v ?l) (at ?p ?l) (>= (capacity ?v) (weight ?p)))
 :effect (and (not (at ?p ?l)) (in ?p ?v) (decrease (capacity ?v) (weight ?p))))
\end{lstlisting}

Next, a package delivery {\bf problem} can consist of a truck \pddl{truck2 - vehicle} that is at the depot \pddl{(at truck2 depot)}, has a specified capacity \pddl{(= (capacity truck2) 5)}, and is carrying some packages \pddl{(in truck2 package3) \& (in truck2 package1)}.
A PDDL goal condition consists of a conjunction of ground atoms and inequalities of expressions of numeric functions, such as \pddl{(at package1 office) \& (>= (capacity truck2) 7)}.
A goal condition induces a set of goal states as a state that satisfies a goal condition is considered a goal state.

\ssection{Heuristic search}
Heuristic search is the main driver of current state-of-the-art planners~\cite{richter.westphal.2010,seipp.etal.2020}, with roots dating back to early planning systems~\cite{fikes.nilsson.1971,laborie.ghallab.1995,mcdermott.1996,bonet.etal.1997}.
A heuristic function is a function $h: \states \to \R \cup \set{\infty}$ which estimates the cost to go from a state $s$ to a goal state in $\goals$, and returns the value $\infty$ estimating that there is no plan from $s$ to a goal.
A heuristic is safe if $h(s) = \infty$ only if there is no plan from $s$ to a goal.
The optimal heuristic $h^*$ returns the \textbf{minimal} plan cost from a state, if a plan exists, and $\infty$ otherwise.
A heuristic function is analogous to an approximate value function in RL which estimates the \textbf{maximal} reward from a state where one views the cost of a plan as a negative reward.
Thus, $h^*$ is the optimal value function and induces a policy $\pi: \states \to \actions$ that executes a successor state with the lowest value, breaking ties arbitrarily.
However, $h^*$ is intractable to compute so heuristics are instead used to guide search.
The \defn{Greedy Best First Search (GBFS)} algorithm searches for a path over the graph induced by the transition system of a planning problem from an initial state to a goal state, guided by a heuristic function.
It consists of a priority queue initialised with the initial state as the only element, and a main loop that performs the following steps while the queue is non-empty:
\begin{enumerate}[label=({\arabic*}),leftmargin=*,noitemsep]
  \item pop a state $s$ with the lowest heuristic value (breaking ties arbitrarily) from the queue,
  \item generate the successors of $s$ via all applicable actions, and
  \item check if a successor $s'$ is a goal, in which case terminate with the plan to $s'$, and otherwise add $s'$ to the queue if it has not been seen before.
\end{enumerate}
The algorithm determines a problem is unsolvable if the main loop terminates which only occurs if there are finitely many states.
GBFS is \emph{sound} (any returned solution is correct) and \emph{complete} (a solution is returned if it exists) for planning problems with finite state spaces.
A$^*$ search~\cite{hart.etal.1968} is an optimal search algorithm when used with an admissible heuristic.
We do not use it in our experiments because LM-generated heuristics are not guaranteed to be admissible.

\ssection{Problem statement: generalised planning}
Recall that planning problems are specified by a PDDL domain and problem.
Let $\domain$ denote a PDDL domain, and $\problem$ a problem associated with $\domain$, where we say that the problem $\problem$ \emph{belongs} to $\domain$.
A \emph{generalised planning (GP) problem} is a tuple $\gen{\trainProblems, \testProblems}$ where $\trainProblems$ is a finite set of training problems and $\testProblems$ a (possibly infinite) set of testing problems belonging to the same domain.
A solution to a GP problem is a policy (here a program) that is synthesised from $\trainProblems$ and can be instantiated on and solve each problem $\problem \in \testProblems$.
A key attribute of GP problems is that problems in $\testProblems$ are larger in terms of number of objects and more difficult than to solve than problems in $\trainProblems$.
Thus, GP is as an out-of-distribution learning task.

\nbsm{Remind me to tell you about the "eyeball rolling" of the term "out-of-distribution" in the programmatic RL workshop.  It's fine/good to use here.}

% !TEX root =llmplan.tex

\section{LM-Generated Python Programs for Generalised Planning}
\label{sec:method}
In this section we describe our approach for GP which employs LMs to generate code as programs representing value functions and policies for use in planning.
Notably, all approaches to be described next are sound algorithms and are furthermore complete when used with complete search.

\ssection{LMs for GP}
Our approach consists of two modules for solving a GP problem $\gen{\trainProblems, \testProblems}$ as illustrated in \Cref{fig:pipeline}: \textbf{(1)} a {program synthesis} module (\Cref{ssec:synthesise}) and \textbf{(2)} a {program instantiation} module (\Cref{ssec:instantiate}).
The \emph{program synthesis} module (red in \Cref{fig:pipeline}) takes as input the training problems $\trainProblems$, corresponding domain $\domain$, and a natural language prompt, and outputs a program implementing a value function or policy.
The \emph{program instantiation} module (yellow in \Cref{fig:pipeline}), takes as input a problem $\problem \in \testProblems$ and a program from the previous module and outputs a plan $\plan$ for $\problem$.
Note that the LM is queried for a program once per domain $\domain$ associated with the GP problem while the planner module is called for every problem $\problem$ in $\testProblems$.

\subsection{LMs for program generation}\label{ssec:synthesise}
% \ssection{(3.1) LMs for program generation}
Both the value function and policy programs are prompted to be generated as Python classes which extend a class containing the domain $\domain$ associated with the GP problem to be solved.
We follow a setup similar to~\cite{correa.etal.2025} for generating code as programs extended to both value functions and policies.
More specifically, for any given domain, we prompt an LM for a program as a value function or policy with the following content:
\begin{enumerate}[label=(\roman*),leftmargin=*,noitemsep]
    \item instructions for generating code for a program as a value function or policy,
    \item the domain $\domain$ corresponding to the GP problem and two problems in $\trainProblems$ in PDDL,
    \item an example PDDL file for the Gripper domain~\cite{mcdermott.2000} and a Gripper problem,
    \item an example Python class encoding a value function or policy for Gripper.
\end{enumerate}

Differently to~\cite{correa.etal.2025} we only provide example files for a single domain instead of two domains (Gripper and Logistics).
Gripper is a simple PDDL domain consisting of two rooms and a set of balls located in one room. The objective is for a robot to move all balls in one room to the other, subject to capacity constraints.
Logistics is a more complex domain than Gripper which commands a fleet of planes and trucks for delivering packages across various cities and locations.
We emphasise that the example Gripper files and Python class (iii-iv) are used to provide in-context learning~\cite{dong.etal.2024} for the LM to understand the syntax of PDDL and the Python class structure, but does not provide any information about solving the GP problem associated with $\domain$.

\subsection{Sound planning with LM-generated programs}\label{ssec:instantiate}
Next, we describe how to
\textbf{(a)} ensure that LM-generated policies are sound when used as reactive controllers,
\textbf{(b)} use value functions for sound and complete planning and
\textbf{(c)} combine both value functions and policies together for sound and complete planning and boosting performance over \textbf{(b)}.

\newcommand{\algoWidth}{0.475\textwidth}
\newcommand{\topvskip}{
    \vskip-1ex
}
\newcommand{\botvskip}{
    \vskip-2ex
}
\newcommand{\popH}{\mathit{popH}}
\newcommand{\queue}{\mathit{q}}
\newcommand{\queueH}{\queue_H}
\newcommand{\queueP}{\queue_P}
\newcommand{\push}{\mathit{push}}
\renewcommand{\insert}{\mathit{insert}}
\newcommand{\visited}{\mathit{visited}}
% !TEX root =llmplan.tex

\setlength{\algomargin}{0em}
\begin{wrapfigure}{L}{\algoWidth}
  \topvskip
  \begin{minipage}{\algoWidth}
    \begin{algorithm}[H]
      \algFontSize
      \DontPrintSemicolon
      \caption{Greedy Best First Search (GBFS) with a value function and policy \ref{ssec:both}}
      \label{alg:gbfs}
      \KwIn{
        Planning problem $P=\gen{\states, \actions, \transition, \goals, \init, \cost}$,
        value function $h$, and
        policy $\pi$.
        % a value function $h:\states \to \R \cup \set{\infty}$, and
        % a policy $\pi: \states \to \actions$.
      }
      \KwOut{
        A plan $\plan$ or failure if no plan exists.
      }
      \lIf{$\init \in \goals$}{
        \Return{$\emptyset$} \label{line:trivial}
      }
      $\queueH \la [\init]; \queueP \la []; \visited \la \set{\init}; \popH \la \top$ \label{line:initend} \\
      \While{$\queueH$ or $\queueP$ is not empty}{ \label{line:while}
        \lIf{$\popH = \top$}{$s \la \argmin_{s \in \queueH} h(s)$}
        \lElse{$s \la \argmin_{s \in \queueP} h(s)$}
        $\popH \la \;!\popH$ \label{line:alternateend} ;
        $\queueP.push(\pi(s))$ \label{line:policy} \\
        \For{$a \in A(s)$}{ \label{line:actions}
          $s' \la \transition(s, a)$ \label{line:successors} \\
          \lIf{$s' \in \visited$}{
            \textbf{continue} \label{line:seen}
          }
          \lIf{$s' \in \goals$}{
            \Return{\textit{extract plan to $s'$}}  \label{line:goal}
          }
          $\visited.\insert(s')$; $\queue.\push(s')$ \label{line:rem2} \\
        }
      }
      \Return{\textit{failure}} \label{line:failure}
    \end{algorithm}
  \end{minipage}
  \botvskip
\end{wrapfigure}

\ssection{\textlabel{(3.2.a)}{ssec:policies} Sound policies as reactive controllers}
Given a planning problem $P=\gen{\states, \actions, f, \goals, s_I, c}$, the policy program \plmp{} is instantiated on $P$ to represent a policy $\pi: \states \to \actions$ that takes as input a state $s$ and its applicable actions $A(s)$\nbsm{If writing again might consider using notation that is suggestive of a set. But no big deal.} and outputs an action $\pi(s) \in A(s)$.
We furthermore have wrapper code around \plmp{} such that if due to an error or mistake in the generated code and $\pi(s) \notin A(s)$, we choose a random action from $A(s)$ instead.
We use the policy program in the usual way for a policy via rollout.
Specifically, we repeatedly apply the operation $s = f(s, \pi(s))$ starting from the initial state $s = s_I$ of a planning problem until either a goal is reached (i.e. $s \in \goals$) or no applicable actions exist.
Notably, this approach is sound, meaning that any plan returned by this procedure is valid.

\begin{theorem}
    Approach \ref{ssec:policies} is sound with respect to an input planning problem $P$.\nbsm{Do you need the word "input"? It's not mentioned in 3.2.a.  Also, it might help to remind the reader what the "input planning problem $P$ is by listing the tuple. This will make it more clear that the transition function $f$ is part of the problem definition.}
\end{theorem}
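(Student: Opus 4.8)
The plan is to unfold the definitions of \emph{soundness} and of a \emph{plan}, and then show directly that any action sequence emitted by the rollout of \ref{ssec:policies} satisfies every clause of the plan definition. Here soundness means: whenever the rollout procedure returns a sequence of actions, that sequence is a valid plan for $\problem = \gen{\states, \actions, \transition, \goals, \init, \cost}$. Recall that a plan is a finite sequence $a_0, \ldots, a_n$ with $\transition(s_i, a_i) = s_{i+1} \not= \bot$ for each $i$, with $s_0 = \init$, and with $s_{n+1} \in \goals$. So the task reduces to checking these three conditions for the sequence the rollout produces.

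First I would fix notation for the trajectory: let $s_0 = \init$, and for each step let $a_i = \policy(s_i)$ and $s_{i+1} = \transition(s_i, a_i)$. The base condition $s_0 = \init$ holds by construction, and the rollout returns a sequence precisely when it halts at a state lying in $\goals$, which supplies $s_{n+1} \in \goals$. The substantive content is the middle condition $\transition(s_i, a_i) \not= \bot$, which I would establish by a short induction on $i$: assuming $s_i$ is a genuine (non-$\bot$) state, I show the applied action is applicable so that $s_{i+1}$ is again a genuine state.

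The crux of that inductive step relies entirely on the wrapper code rather than on any correctness property of the LM-generated program. Because the wrapper guarantees $\policy(s_i) \in A(s_i)$ for every encountered state $s_i$---defaulting to a random action drawn from $A(s_i)$ whenever the generated code errs or returns an inapplicable action---and since $A(s_i) = \set{a \mid \transition(s_i, a) \not= \bot}$ by definition, we obtain $\transition(s_i, a_i) \not= \bot$ immediately. Hence each transition of the candidate plan is valid and each $s_{i+1}$ is well defined, closing the induction. Combining this with the base case and the goal-termination test yields all three clauses of the plan definition, so the emitted sequence is a valid plan. I expect no genuine obstacle in the argument; the only point requiring emphasis is that soundness is \emph{independent} of the quality of the synthesised policy---a buggy program still yields a sound procedure, because it is the applicability filter of the wrapper and the $s \in \goals$ termination test, and not the policy itself, that enforce validity.
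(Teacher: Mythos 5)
Your argument is correct and follows essentially the same route as the paper's proof sketch: every applied action is applicable at its current state (enforced by the wrapper's fallback to a random action in $A(s)$), so the induced trajectory is well defined, and a sequence is only returned upon reaching a goal state. You simply make explicit, via the induction on the trajectory and the unfolding of the plan definition, what the paper leaves as a one-line sketch.
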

\begin{proof}[Proof sketch]
  This is because all predicted actions are applicable at their current state.
  Thus any sequence of actions generated by the policy is applicable from the initial state.
  Furthermore, a plan is only returned when the goal is reached so any returned action sequence reaches the goal.
\end{proof}

\ssection{\textlabel{(3.2.b)}{ssec:heuristics} Sound and complete value functions in search}
Equivalently to~\cite{correa.etal.2025}, the value function program \hlmp{} consists of a method representing a heuristic function $h: \states \to \R \cup \set{\infty}$\nbsm{Should it be positive reals? -- not critical.}\nbdc{It does not matter for Greedy Best First Search; i.e. negatives are fine}\nbsm{Only if you know there are no negative cycles. Otherwise, if I remember correctly, you may loop in a cycle. (My memory is foggy. Please recheck.)}\nbdc{Again, it's fine for GBFS if you not reopen nodes.} that takes as input a state $s$ from $\problem$ and outputs a value $h(s)$ used in GBFS described in \Cref{sec:background}.
To ensure the output heuristic is safe, we have wrapper code that converts $\infty$ outputs to a large constant value.
Thus, we have the following property that this approach is sound and complete.

\begin{theorem}
  Approach \ref{ssec:heuristics} is sound with respect to an input planning problem $P$ and complete if the state space of $P$ is finite.
\end{theorem}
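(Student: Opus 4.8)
The plan is to reduce the claim to the soundness and completeness of Greedy Best First Search (GBFS) on finite state spaces, both asserted in \Cref{sec:background}, and then to verify that wrapping the LM-generated heuristic $h$ so that it never outputs $\infty$ leaves these two properties intact.

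For \textbf{soundness}, I would observe that the heuristic influences only the order in which states are popped from the priority queue, never the validity of the transitions taken. When GBFS expands a state $s$ it enumerates the applicable actions $a \in A(s)$ and forms successors $s' = \transition(s,a)$, and it returns a plan only when some generated successor satisfies $s' \in \goals$. Every edge on the extracted path therefore corresponds to an applicable action, and the path ends in a goal state, so the returned action sequence is a valid plan for $\problem$. This argument makes no use of the values produced by $h$; hence any heuristic---correct or not---yields a sound procedure.

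For \textbf{completeness} when $\states$ is finite, the two key ingredients are the visited set and the absence of heuristic-based pruning. Each state is recorded the first time it is generated and is never re-added, so each of the finitely many states is expanded at most once and the main loop terminates after finitely many iterations. The safety wrapper, which replaces $\infty$ by a large finite constant, guarantees that no reachable state is discarded as a dead end: every previously unseen successor is enqueued regardless of its heuristic value. A straightforward induction on the length of a shortest path from $\init$ then shows that every state reachable from $\init$ is eventually expanded. Consequently, if a goal state is reachable it is detected upon generation, and if none is reachable the queue empties and the procedure correctly reports failure.

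The main subtlety worth flagging is precisely the role of the safety wrapper. Were $h$ permitted to output $\infty$, an implementation that treats an infinite heuristic value as a dead end would prune such states, and since the LM-generated $h$ need not in fact be safe this could silently remove a reachable goal and destroy completeness. Converting $\infty$ to a large constant sidesteps this issue, so the search coincides with the exhaustive, prune-free GBFS of \Cref{sec:background}, for which completeness on finite state spaces holds. The remaining steps are routine bookkeeping about the queue and the visited set.
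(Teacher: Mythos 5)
Your proposal is correct and follows essentially the same route as the paper's own (much terser) proof sketch: reduce everything to the soundness and completeness of GBFS on finite state spaces, and observe that the wrapper converting $\infty$ to a large finite constant keeps the heuristic safe so that no reachable state is pruned. You simply spell out the details that the paper leaves implicit.
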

\begin{proof}[Proof sketch]
  This follows from the fact that GBFS is sound and complete for finite state spaces when used with a safe heuristic, and that the generated value function program used as a heuristic is ensured to be safe by disallowing $\infty$ values.
\end{proof}

\ssection{\textlabel{(3.2.c)}{ssec:both} Sound and complete value functions and policies in search}
We now describe how to combine LM-generated policies and value functions with search.
The main idea is that GBFS can be extended to two queues from which nodes are popped in a round robin fashion: one each for a value function \hlmp{} and policy \plmp{}.
Indeed, the usage of multiple queues have been explored in previous work with multiple heuristics~\cite{roeger.helmert.2010} or \emph{preferred operators}, actions that are deemed useful for achieving the goal within the computation of a heuristic function~\cite{hoffmann.nebel.2001}.
The algorithm is summarised in \Cref{alg:gbfs}, which begins by checking if the problem is trivially solvable (\Cref{{line:trivial}}) before initialising the two queues $\queueH$ and $\queueP$ representing a queue containing successors of any expanded states, and a state predicted by the policy $\pi$, respectively (\Cref{line:initend}).
The main loop alternates between popping states from the two queues, followed by pushing the state predicted by the policy into the $\queueP$ queue (\Crefrange{line:while}{line:alternateend}), and the remainder of the original GBFS algorithm for the $\queueH$ queue (\Crefrange{line:actions}{line:rem2}) described in \Cref{sec:background}.
This approach is also sound and complete.

\begin{theorem}
  Approach \ref{ssec:both} is sound with respect to an input planning problem $P$ and complete if the state space of $P$ is finite.
\end{theorem}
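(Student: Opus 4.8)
The plan is to derive both properties by reducing \Cref{alg:gbfs} to the single-queue results of Approaches \ref{ssec:policies} and \ref{ssec:heuristics}, viewing the algorithm as an ordinary early-goal-detection GBFS on the heuristic queue $\queueH$ that is augmented with a second, policy-populated queue $\queueP$. For soundness I would maintain the invariant that every state ever inserted into $\queueH$ or $\queueP$ is reachable from $\init$ by a sequence of applicable actions, with the stored parent pointers encoding such a sequence. The base case is $\init \in \queueH$ with the empty plan. For the inductive step, a state entering $\queueH$ on \Cref{line:rem2} has the form $\transition(s,a)$ with $a \in A(s)$ and $s$ already reachable, so it is a genuine successor; a state entering $\queueP$ on \Cref{line:policy} is the policy successor of $s$, which is again genuine, i.e.\ $\transition(s,a)\neq\bot$, because the wrapper of Approach \ref{ssec:policies} forces the action $a$ it selects to lie in $A(s)$. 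Since a plan is returned only on \Cref{line:goal}, when a generated successor $s'$ satisfies $s'\in\goals$, the extracted path from $\init$ to $s'$ is a valid applicable-action sequence ending in a goal, i.e.\ a valid plan. This is exactly the soundness argument of Approach \ref{ssec:heuristics}, with the extra observation that the policy-generated entries of $\queueP$ are themselves valid transitions.

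For completeness on a finite state space, the key point is that $\queueH$ on its own realises a complete GBFS over the reachable fragment of $\states$. The $\visited$ guard on \Cref{line:rem2} ensures each state is inserted into $\queueH$ at most once, so the number of distinct $\queueH$-pops is bounded by $\abs{\states}$; moreover every popped state---from either queue---has all of its successors generated and goal-checked in the loop of \Cref{line:actions}. Because $\popH$ is toggled every iteration, whenever $\queueH$ is non-empty it is popped within two iterations, and a state placed in $\queueH$ stays there until it is popped from $\queueH$. Hence every reachable state is eventually expanded via $\queueH$. I would then argue that if a plan exists, some goal $g$ is reachable; taking the last non-goal state $s$ on a witnessing path, $s$ is reachable and therefore expanded, at which point $g=\transition(s,a)$ is generated and detected on \Cref{line:goal} before $\queueH$ can empty. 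Conversely, once $\visited$ has saturated over the reachable component with no goal detected, no new successor is produced and the problem is unsolvable, matching the completeness statement of Approach \ref{ssec:heuristics}.

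The step I expect to be the main obstacle is controlling the interaction with the policy queue $\queueP$. Because \Cref{line:policy} pushes the policy successor on every iteration without a $\visited$ check, $\queueP$ may hold repeated copies of states and is populated regardless of which queue was popped, so I must show that these extra pops neither corrupt a returned plan nor disturb the coverage argument. For soundness this is already handled, since every entry of $\queueP$ is a genuine transition. For completeness the decisive observation is that popping a state from $\queueP$ never removes it from $\queueH$, so it cannot prevent that state from later being popped and expanded via $\queueH$; the round-robin coverage of the reachable component by $\queueH$---bounded by $\abs{\states}$ distinct pops---therefore proceeds unimpeded, and any successors produced by the additional $\queueP$-driven expansions are themselves reachable states already slated for $\queueH$ coverage (indeed each is, via \Cref{line:seen}, filtered once its parent has been expanded). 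Consequently the policy queue can only accelerate goal detection, and both soundness and finite-state completeness are inherited from Approaches \ref{ssec:policies} and \ref{ssec:heuristics}.
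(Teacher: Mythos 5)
Your argument is correct and follows the same route as the paper, whose proof is just the one-line observation that extending GBFS with a second queue keeps the search exhaustive over a finite state space while preserving soundness. Your version simply fills in the details that sketch leaves implicit --- the reachability invariant covering both queues (policy successors are genuine transitions because $\pi(s)\in A(s)$ is enforced) and the fact that $\queueP$-pops never remove states from $\queueH$, so the heuristic queue still exhaustively covers the reachable fragment.
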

\begin{proof}[Proof sketch]
  Extending GBFS with multiple queues preserves the same soundness and completeness properties of GBFS as the search is still exhaustive for finite state spaces.
\end{proof}

% !TEX root =llmplan.tex

\newcommand{\question}[1]{{\textbf{Q:} #1}}
\newcommand{\answer}[1]{\newline \textbf{A:} {\color{magenta}{#1}}}

\section{Experiments}
\label{sec:experiments}
We conduct experiments to address the following questions.
\begin{enumerate}[label=(\textbf{\arabic*}),leftmargin=*]
    \item \question{Which of LM-generated value functions used for search or policies used as is solve more planning problems within a fixed time limit, and how do they compare to PDDL planners?}
          \answer{Policies are faster for simpler problems, while search with value functions solve more complex problems. LM programs are competitive with PDDL planners on easy domains.}
    \item \question{How important is soundness and completeness for planning performance? }
          \answer{Soundness is important but completeness is not always necessary.}
    \item \question{Are LMs planning over word semantics or logical symbols? }
          \answer{LMs are shown to be capable to reason over PDDL planning problems represented in either semantic or symbolic text, but more experimentation is required to draw conclusive results.}
\end{enumerate}

\begin{wrapfigure}{R}{0.35\textwidth}
    
    \vskip-5pt

    \centering
    \includegraphics{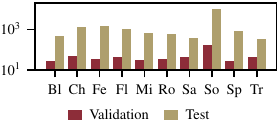}
    \vskip-3pt
    \caption{
        Number of objects (log $y$-axis) of validation and test problems across domains.
    }
    \label{fig:sizes}

    \vskip-5pt
    
\end{wrapfigure}

\ssection{Domains}
We evaluate the effectiveness of LM-generated value functions and policies on 10 standard PDDL planning domains
(\bl ocksworld, \ch ildsnack, \fe rry, \fl oortile, \mi conic, \ro vers, \sa tellite, \so koban, \sp anner, \tr ansport)
with validation and testing problems (of increasing difficulty in terms of number of objects) taken from the Learning Track of the 2023 International Planning Competition~\cite{taitler.etal.2024}.
\Cref{fig:sizes} summarises the ranges of problem sizes across the domains, noting that the testing sizes are up to two orders of magnitude larger than validation problems.

\ssection{Implementation}
For our experiments, we implement a planner from scratch, namely \lmplan{}, which prompts LMs for programs and both (a) uses value functions for heuristic search and (b) executes policies as reactive controllers.
\lmplan{} also implements the search using both value functions and policies introduced in \Cref{alg:gbfs}.
The implementation consists of a combination of Python for heuristic evaluation and C++ for data structure representations of planning components.
We use the SQLite library~\cite{hipp.2020} to compute actions applicable for a state in \Cref{line:actions}, as planning states can be viewed as databases and applicable action generation as database queries~\cite{correa.etal.2020}.

\ssection{Validation for selecting LM-generated programs}
With regards to LMs, we experiment with DeepSeek-R1~\cite{deepseekai.etal.2025}, Gemini 2.0 Flash, and Gemini 2.5 Flash Preview 04-17~\cite{geminiteamgoogle.etal.2023}.
Similarly to~\citet{correa.etal.2025}, we perform a validation procedure to select the best value function and policy for each domain out of several generated programs.
Each LM is called 10 times to generate a value function and policy program for each domain.
For each domain, the best value function (resp. policy) is then selected by taking the best average score of the program when used for search (resp. rollout) on 10 small, training problems.
We choose a scoring function that favours models that solve the training problems quickly, as the time to generate a solution is a major performance metric for satisficing planning.
Specifically, the \textlabelEmph{validation score}{ssec:valmetric} for each problem and program is given by $(1+\log(t+1))^{-1}$
if the program used for search or rollout solves a given problem in $t$ seconds for $t<60$ seconds, and $0$ otherwise.
\Cref{app:money} lists prompting costs, \Cref{app:best-programs} shows LMs that generated the selected program, and \Cref{sec:generation_time} reports the corresponding generation times.

\textlabel{Approaches}{ssec:approaches}
We evaluate our proposed approaches using LM-generated programs in the \lmplan{} planner ($\bullet$) and compare them to traditional PDDL planners ($\circ$) listed as follows:
\begin{itemize}[
        noitemsep,
        leftmargin=*
    ]
    \item[$\circ$] \hff{}: GBFS with the FF heuristic~\cite{hoffmann.nebel.2001},
    \item[$\circ$] \cps{}: reported results for LM-generated programs for heuristic search by~\citet{correa.etal.2025},
    \item[$\circ$] \wlgoose{}: GBFS with state-of-the-art value functions learned with the Weisfeiler-Leman graph kernel for generating features from planning tasks~\cite{chen.etal.2024a} and Gaussian process regression for computing linear model weights,
    \item[$\circ$] \lama{}: a state-of-the-art, general-purpose planner that uses multiple heuristics, queues and several optimisation techniques~\cite{richter.westphal.2010}.
    \item[$\bullet$] \heuristicMode{}: GBFS using an LM-generated program as a value function,
    \item[$\bullet$] \policyMode{}: rollout of an LM-generated program as a policy,
    \item[$\bullet$] \policyHeuristicMode{}: GBFS with two queues associated with \hlmp{} and \plmp{},
    \item[$\bullet$] \portfolio{}: a choice between \policyMode{} and \policyHeuristicMode{} depending on whether \plmp{} or \hlmp{} by themselves respectively achieves the higher average \ref{ssec:valmetric}.
\end{itemize}

There are 90 testing problems in each of the 10 domains for a total of 900 testing problems.
Each approach is run on Intel Xeon Platinum 8268 cores for 1800 seconds and with a memory budget of 8GB for each problem.
An exception is \cps{} where we report results from the original paper using AMD EPYC 7742 cores as their code is not yet publicly available.
We report the coverage metric in \Cref{tab:coverage}, the number of problems solved within the given computational budgets.

\begin{table}[t]
    \caption{
        Coverage ($\uparrow$) of existing planners (top) and \lmplan{} approaches introduced in this paper (bottom), see \ref{ssec:approaches} for details.
        The S/C column represents if an approach is sound/complete.
        Best values in each column are highlighted, where 90 is the best achievable score per domain.
    }
    \newcommand{\covTableWidth}{\textwidth}
    \tabcolsep 2pt
    \small
    \label{tab:coverage}
    % !TEX root =llmplan.tex

\begin{threeparttable}
  \begin{tabularx}{\covTableWidth}{p{2cm} *{2}{p{.5cm}} *{11}{Y}}
    \toprule
    & S & C
    & \bl & \ch & \fe & \fl & \mi & \ro & \sa & \so & \sp & \tr & $\Sigma$ \\
    \midrule
    \hff & \ding{51} & \ding{51} &
28 & 26 & 68 & \first{12} & \first{90} & 34 & 65 & 36 & 30 & 41 & 430 \\
    \cps & \ding{51} & \ding{51} &
66 & 22 & \zerocell{} & 4 & \first{90} & 32 & \zerocell{} & 30 & 70 & 59 & $^*$373 \\
    \wlgoose & \ding{51} & \ding{51} &
75 & 29 & 76 & 2 & \first{90} & 37 & 53 & 38 & 73 & 29 & 502 \\
    \lama & \ding{51} & \ding{51} &
61 & \first{35} & 68 & 11 & \first{90} & \first{67} & 89 & \first{40} & 30 & 66 & 557 \\
    \midrule
    \heuristicMode & \ding{51} & \ding{51} &
33 & 15 & 59 & 2 & 63 & 32 & 60 & 32 & 46 & 55 & 397 \\
    \policyMode & \ding{51} & \ding{55} &
\first{90} & 11 & \first{90} & 0 & \first{90} & 12 & \first{90} & 0 & \first{90} & \first{90} & 563 \\
    \policyHeuristicMode & \ding{51} & \ding{51} &
36 & 19 & 66 & 2 & 72 & 35 & 62 & 34 & 47 & 51 & 424 \\
    \portfolio & \ding{51} & \ding{55} &
\first{90} & 19 & \first{90} & 2 & \first{90} & 35 & \first{90} & 34 & \first{90} & \first{90} & \first{630} \\
    % \goalcount & \ding{51} & \ding{51} &
    % 27 & 9 & 60 & 1 & 63 & 26 & 31 & 33 & 30 & 23 & 303 \\
    % \cps & \ding{51} & \ding{51} &
    % 66 & 22 & \zerocell{} & 4 & \first{90} & 32 & \zerocell{} & 30 & 70 & 59 &
    % % $^*$373--553 \\
    % $^*$373 \\
    % \wlgoose & \ding{51} & \ding{51} &
    % 48 & 25 & 69 & 3 & \first{90} & 47 & 51 & 37 & 68 & 53 & 491 \\
    % \lama & \ding{51} & \ding{51} &
    % 50 & \first{34} & 64 & \first{11} & 81 & \first{65} & 81 & \first{38} & 19 & 63 & 506 \\\midrule
    % \heuristicMode & \ding{51} & \ding{51} &
    % 33 & 15 & 58 & 1 & 62 & 32 & 54 & 32 & 44 & 47 & 378 \\
    % \policyMode & \ding{51} & \ding{55} &
    % \first{69} & 9 & \first{90} & 0 & \first{90} & 11 & \first{89} & 0 & \first{90} & \first{85} & 533 \\
    % \policyHeuristicMode & \ding{51} & \ding{51} &
    % 34 & 18 & 67 & 1 & 64 & 34 & 59 & 34 & 44 & 47 & 402 \\
    % \portfolio & \ding{51} & \ding{55} &
    % \first{69} & 18 & \first{90} & 1 & \first{90} & 34 & \first{89} & 34 & \first{90} & \first{85} & \first{600} \\
    \bottomrule
  \end{tabularx}
  \begin{tablenotes}
  \item[$\ast$] Note that \cps{} does not support the \ch{} and \sa{} domains, indicated by \zerocell{}, such that $\sum$ is not representative of its best performance.
  \end{tablenotes}
\end{threeparttable}
\end{table}

\ssection{(1) Do value functions for search or policies generated by LMs solve planning problems faster?}
We note from \Cref{tab:coverage} that policies \plmp{} vastly outperform value functions \hlmp{} for more domains but otherwise both are complementary overall.
More specifically execution of the policy \plmp{} solves all problems for 6 domains (Blocksworld, Ferry, Miconic, Satellite, Spanner and Transport), but struggles to solve problems from the remaining domains.
Regardless, \plmp{} already outperforms a state-of-the-art planner, LAMA, in overall coverage (563 against 557) as well as in number of domains (6 against 4).
On the other hand, search with the value function \hlmp{} is complete and hence provides more well-rounded performance across domains by solving at least 30 problems from each domain except Childsnack and Floortile, but has a lower overall cumulative coverage.
A similar statement can be made by the baseline planners which all employ heuristic search with value functions.
However, the validation procedure correctly identifies when a policy or search is performs better, resulting in the strong portfolio planner, \portfolio{}, with a total coverage of 630. %\nbjz{nice, this is much clearer than before, I think}
As to be discussed in the next question, \portfolio{} takes advantage of the \policyHeuristicMode{} setting which improves upon the domains where \plmp{} struggle, and also upon \hlmp{} for search.

\ssection{(2) How important is soundness and completeness for planning performance?}
Recall that search with value functions is both sound and complete, while execution of our LM-generated policies is sound but not complete.
We note that soundness is an important property: previous works show that LM prompting of plans, which is neither sound nor complete, achieve low coverage on planning problems which are trivial to solve for symbolic planning systems~\cite{valmeekam.etal.2023,valmeekam.etal.2024}.
In this work, all approaches are sound so we study the impact of complete algorithms.
We recall from the previous question that \plmp{} and \portfolio{} are incomplete but achieve better overall performance over complete approaches.
However, as also noted previously, the performance of policies on some domains are very poor when the LM did not understand how to solve these tasks.

However, we observe that combining the policies into (complete) search with value functions as done in \policyHeuristicMode{} always matches or improves upon pure search with \heuristicMode{}.
Indeed, adding policy generated states into a queue strictly improves search in 8 out of 10 domains, including domains on which the policy \policyMode{} by itself struggles, and improves the overall coverage of \hlmp{} from 397 to 424.
Furthermore, the validation procedure correctly identifies when to use search or policy execution which suggests that combining the best of complete and incomplete algorithms while maintaining soundness is a promising approach.
We refer to \Cref{app:corr} for a statistical analysis of a positive correlation between validation and test performance that supports this validation procedure.

\ssection{(3) Are LMs planning over word semantics or logical symbols?}
In order to address\nbsm{I'm not sure the ablation allows us to answer the question definitively.} this question, we perform the ablation introduced by~\citet{silver.etal.2024} by replacing all type, predicate, function, schema and object names in all PDDL files with nondescriptive symbols; e.g. predicates are renamed to \pddl{p1}, \pddl{p2}, \dots, actions to \pddl{a1}, \pddl{a2}, \ldots, objects to \pddl{o1}, \pddl{o2}, \ldots, etc.
\Cref{fig:symbolic} illustrates the coverage results of LM-generated value functions for search (\hlmp{}) and policies for rollout (\plmp{}) on PDDL input files with and without semantic names.

\newcommand{\sem}{\mathrm{sem}}
\newcommand{\sym}{\mathrm{sym}}
\begin{wrapfigure}{r}{0.5\textwidth}

    \vskip-2.5ex

    \covTableSize
    \tabcolsep 3pt
    \newcommand{\m}[1]{\cellcolor{green!40}{{#1}}}
    \newcommand{\f}[1]{\cellcolor{red!40}{{#1}}}
    \makeatletter\def\@captype{table}\makeatother% "Change float to table"
    \caption{Coverage ($\uparrow$) with ($\sem$) and without ($\sym$) semantic names. Green/red cells indicate where $\sym$ solves at least 3 problems more/fewer problems than its $\sem$ counterpart for a domain.}
    \label{fig:symbolic}
    \begin{tabularx}{0.5\textwidth}{l *{11}{Y}}
        \toprule
                   & \bl   & \ch    & \fe    & \fl & \mi    & \ro    & \sa    & \so & \sp    & \tr    & $\Sigma$ \\
        \midrule
        $V_\sem$   & 
33 & 15 & 59 & 2 & 63 & 32 & 60 & 32 & 46 & 55 & 397 \\
        $V_\sym$   &
33 & \m{24} & 61 & 1 & \m{70} & 34 & \f{48} & 30 & \m{63} & \f{42} & 406 \\
        \midrule
        $\pi_\sem$ &
90 & 11 & 90 & 0 & 90 & 12 & 90 & 0 & 90 & 90 & 563 \\
        $\pi_\sym$ &
\f{1} & 12 & 90 & 0 & 90 & \m{46} & \f{6} & 0 & 90 & 89 & 424 \\
        \bottomrule
    \end{tabularx}

    \vskip-0.5ex

\end{wrapfigure}

Surprisingly and contrary to previous works performing similar experiments~\cite{valmeekam.etal.2023a,silver.etal.2024}, we observe that in multiple settings, LM-generated programs perform better without than with semantic names in the PDDL inputs.
LM-generated value functions perform better by at least 3 problems without semantic names on 3 domains and worse on 2 (see green and red cells in \Cref{fig:symbolic}).
The case for policies is performing better on 1 domain and worse on 2 domains.
Specifically, removing semantic names often improves performance for value function generation but decreases for policy generation.
However, there are still 3 domains for which policies can solve all problems even without semantic names in PDDL inputs (\fe, \mi, \sp).
It is unclear exactly why this may be the case but the results mirror related work studying whether LMs can reason without comprehensible natural language inputs.
Indeed, \citet{pfau.etal.2024} showed that removing word semantics from reasoning tokens in LMs, instead of the input reasoning problem as in our work, has minor impact to reasoning performance, while \citet{stechly.etal.2025} showed that intermediate reasoning tokens may not reflect human-like or algorithm-interpretable trace semantics.  \nbsm{It would be fun to repeat the experiment using "gensym" to generate random symbols rather than using p1, p2, p3, a1, a2, a3, etc. because "p" does mean predicate and "a" does mean action, and "o" does mean object, and we are synthesizing code, so it would be better for each predicate, symbol, and object name to not identify it as being of a particular type and also numbering them so you know it's something one might iterate over. I can imagine that this would help code synthesis, even more than disparate word names.  BOTH experiments will be interesting for different reasons to futher investigate this phenomenon.}
\nbdc{:thumbs-up:}

% !TEX root =llmplan.tex

\subsection*{Limitations}
Although our proposed approaches, specifically \plmp{} and \portfolio{} in \Cref{tab:coverage}, achieve the highest total coverage, they are not necessarily the best performing planners across all domains and metrics.
This fact may change with the improvement of LMs over time as they understand how to solve more complex planning problems which are still tractable to solve satisficingly, such as the Childsnack, Floortile and Rovers domains.

Another limitation of the synthesised policies is that they have no completeness or termination guarantee.
Although it is possible to guarantee such properties by combining them with search or by using them as an epsilon-greedy policy in the case of domains with reversible actions, the latter approach comes at a cost of extremely poor plan quality.

Indeed, we have yet to discuss plan quality.
\Cref{fig:plan_quality} compares the solution qualities of various approaches.
We direct the reader to the 2nd plot (\plmp{} compared against \lama{}) and note that the LM-generated policy returns inferior plans on the Blocksworld and Transport domains, sometimes up to 100 times worse for several Transport problems.
When analysing the plans and generated code, the policy sometimes randomly selects unnecessary actions that undo previous actions.
% and hence \nbsm{the individual actions are not "redundant" they are "unnecessary". "redundant" implies you can just remove them, but they are only likely redundant when paired with the the action that restores the previous state. E.g., "pickup;putdown"} are redundant.
On the other hand, LM-generated value functions perform similarly to PDDL planners in terms of plan quality (1st plot: \hlmp{} compared against \lama{}).
Similarly, the effect of input representation on plan quality of LM-generated value functions is neglible (3rd plot: $V_{\sem}$ compared against $V_{\sym}$).
However, there is more variance for the case of policies, especially Rovers where although $\pi_{\sym}$ solves more problems, it achieves significantly worse plans (4th plot: $\pi_{\sem}$ compared against $\pi_{\sym}$).

Lastly, although replacing semantic names with arbitrary symbols\nbsm{I'm not sure the term "semantic names" was used consistently from the outset. Perhaps consider checking (or not :-))} in our PDDL encodings does not degrade the performance of value function generation, doing so significantly decreases the performance of generated policies on 2 domains (Blocksworld and Satellite).
Furthermore, it is not obvious why the performance of LM-generated value functions improves when semantic names are removed which warrants further investigation. \nbsm{One question for us (for the presentation) is whether the change in performance is statistically significant. }

\begin{figure}
  \centering
  \includegraphics[width=\textwidth]{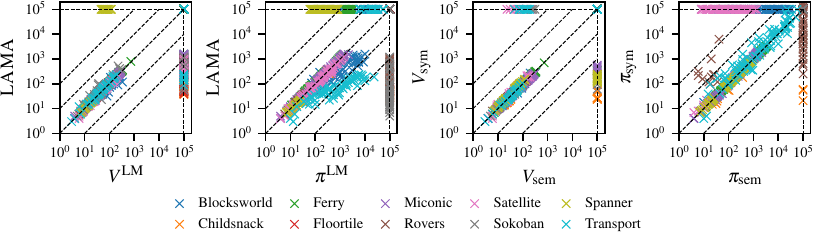}
  \caption{
    Returned plan cost ($\downarrow$) of planners labelled in the $x$ and $y$-axes in log scale.
    Problems that were not solved by one planner has their respective metric set to the axis limit.
    Points on the top left triangle favour the $x$-axis planner while points on the bottom right triangle favour the $y$-axis planner.
  }
  \label{fig:plan_quality}
\end{figure}
% !TEX root =llmplan.tex

\section{Related Work}\label{sec:literature}
\ssection{LMs for PDDL Planning}
Previous works have shown that querying LMs directly to output plans~\cite{valmeekam.etal.2023,valmeekam.etal.2024} or using LMs themselves as value functions~\cite{katz.etal.2024} result in poor planning performance and LM efficiency.
Instead, LMs have shown more success in hybrid systems~\cite{kambhampati.etal.2024} such as those which generate or leverage PDDL models from natural language to be solved by or with the aid of PDDL planning technology~\cite{collins.etal.2022,lin.etal.2023,xie.etal.2023,liu.etal.2023,guan.etal.2023,oswald.etal.2024,liu.etal.2025}.
The closest related works to ours that study generalised planning via LM-generated programs~\cite{silver.etal.2024} and via LM-generated value functions~\cite{tuisov.etal.2025,correa.etal.2025} provide significantly stronger planning performance than standalone LM planners.
In the former work by~\citet{silver.etal.2024}, LMs are queried to generate code which aims to directly synthesise a plan for a PDDL planning problem of a given domain.
Although external verifiers are used to improve the code in a validation stage, the approach is not sound for test problems.
The latter works by~\citet{tuisov.etal.2025} and~\citet{correa.etal.2025} query LMs to generate a value function for use in heuristic search by leveraging the PDDL input structure and existing planners.
\citet{tuisov.etal.2025} generate a value function for each new PDDL problem, whereas we and \citet{correa.etal.2025} only use one value function for all PDDL problems within the same PDDL domain.
Our work differs from these two works as we also synthesise \emph{sound policies} via LMs as opposed to value functions for search or unsound programs.
We further perform an ablation study that shows the LMs used in our framework do not suffer from performance when replacing semantic words in PDDL problem inputs with meaningless symbols.

\ssection{Learning Reusable Value Functions and Policies}
AI planning researchers have been studying representations and approaches for learning reusable value functions and policies for PDDL planning which can generalise to unseen problems of arbitrary numbers of objects~\cite{celorrio.etal.2012,illanes.mcilraith.2019,celorrio.etal.2019}\nbsm{cite Loom?}.
This is analogous to reinforcement learning whose approaches can be categorised into value function learning~\cite{watkins.dayan.1992,mnih.etal.2013,mnih.etal.2015,hasselt.etal.2016}, policy gradient methods~\cite{williams.1992,schulman.etal.2015,schulman.etal.2017}, and actor-critic methods which combine value function and policy learning~\cite{konda.tsitsiklis.1999,sutton.etal.1999a,mnih.etal.2016,lillicrap.etal.2016}.
Recent works in machine learning for PDDL planning involve learning value functions~\cite{shen.etal.2020,karia.srivastava.2021,gehring.etal.2022,chen.etal.2024b,chen.etal.2024a,hao.etal.2024,chen.thiebaux.2024a} and policies~\cite{toyer.etal.2018,toyer.etal.2020,staahlberg.etal.2022,wang.thiebaux.2024}.
Our work is one of the few in this body of work which learns to evaluate both states and actions~\cite{wang.trevizan.2025}.

\ssection{Generalised Planning}
Generalised planning (GP) aims to compute programs that can solve families of related planning problems.
GP stemmed from synthesising programs containing conditionals and loops~\cite{levesque.2005,srivastava.etal.2008,segoviaaguas.etal.2024} from which researchers found various approaches for representing programs such as with memoryless finite-state controllers~\cite{bonet.etal.2009,bonet.etal.2010,hu.degiacomo.2011,aguas.etal.2018} or policies derived from lifted rules~\cite{khardon.1999,martin.geffner.2004,srivastava.etal.2011,illanes.mcilraith.2019,frances.etal.2019,yang.etal.2022a,drexler.etal.2022,hofmann.geffner.2024}.
Key attributes of symbolic GP approaches include guarantees of soundness, completeness, and termination of policies, some or all of which are usually not guaranteed by LM or deep learning architectures.
Our work studies the effect of soundness and completeness for LMs for GP.

% !TEX root =llmplan.tex

\section{Conclusion}
\label{sec:conclusion}
We introduce a language model (LM) planner, \lmplan{}, that generates Python programs as value functions and sound policies for PDDL planning.
Conducted experiments show that \lmplan{} achieves strong planning performance relative to state-of-the-art planners and recent LM approaches.
We also identify that, surprisingly, \lmplan{} sometimes show better planning performance over purely symbolic representations of planning problems.
This observation challenges previous hypotheses that LMs cannot reason over meaningless symbols and is worth much further exploration.

%%%%%%%%%%%%%%%%%%%%%%%%%%%%%%%%%%%%%%%%%%%%%%%%%%%%%%%%%%%%%%%%
%% Acknowledgements
%%%%%%%%%%%%%%%%%%%%%%%%%%%%%%%%%%%%%%%%%%%%%%%%%%%%%%%%%%%%%%%%
\subsection*{Acknowledgements}
This work was carried out when all three authors were research interns at the Vector Institute for AI, Toronto, Canada.
We gratefully acknowledge funding from the Natural Sciences and Engineering Research Council of Canada (NSERC) and the Canada CIFAR AI Chairs Program.
Resources used in preparing this research were provided, in part, by the Province of Ontario, the Government of Canada through CIFAR, and companies sponsoring the Vector Institute.
We thank Felix Dangel for feedback on the figures in the paper.
Finally, the first three authors acknowledge the Nando's team on Bay Street for the copious amounts of Extra Hot chicken that helped fuel this work.

%%%%%%%%%%%%%%%%%%%%%%%%%%%%%%%%%%%%%%%%%%%%%%%%%%%%%%%%%%%%%%%%
%% Bibliography
%%%%%%%%%%%%%%%%%%%%%%%%%%%%%%%%%%%%%%%%%%%%%%%%%%%%%%%%%%%%%%%%
\bibliography{llmplan.bib}
\bibliographystyle{alpha}

%%%%%%%%%%%%%%%%%%%%%%%%%%%%%%%%%%%%%%%%%%%%%%%%%%%%%%%%%%%%%%%%
%% Appendices
%%%%%%%%%%%%%%%%%%%%%%%%%%%%%%%%%%%%%%%%%%%%%%%%%%%%%%%%%%%%%%%%
\clearpage
\appendix

\section{Money Usage Estimate} \label{app:money}
\begin{table}[h!]
  \caption{
    Money usage estimate from calling LM APIs for all experiments;
    400 API calls were made per model: 200 for PDDL files with semantic names, and 200 for without semantic names.
  }
  \label{tab:money}
  \begin{tabularx}{\columnwidth}{l Y Y}
    \toprule
    Model & Average \$USD per query & Total \$USD\\
    \midrule
    DeepSeek-R1 & 0.00655 & 2.62 \\
    Gemini 2.0 Flash & 0 & 0 \\
    Gemini 2.5 Flash Preview 04-17 & 0 & 0 \\
    \bottomrule
  \end{tabularx}
\end{table}

\newcommand{\dsr}{DS-R1}
\newcommand{\geminia}{Gem2.0}
\newcommand{\geminib}{Gem2.5}

\newcommand{\dsrVal}[1]{\cellcolor{red!30}{#1}}
\newcommand{\geminiaVal}[1]{\cellcolor{blue!30}{#1}}
\newcommand{\geminibVal}[1]{\cellcolor{green!30}{#1}}

\newcommand{\dsrCell}{\dsrVal{\dsr}}
\newcommand{\geminiaCell}{\geminiaVal{\geminia}}
\newcommand{\geminibCell}{\geminibVal{\geminib}}

\section{Best LM Generated Programs Chosen by Validation} \label{app:best-programs}
\begin{table}[h!]
  \caption{
    Best LM-generated program chosen by the validation procedure described in \Cref{sec:experiments} for experiments of PDDL files with (semantic) and without (symbolic) semantic names.
    The LM models used in the experiments are DeepSeek-R1 ({\color{red}\dsr{}}), Gemini 2.0 Flash ({\color{blue}\geminia{}}), and Gemini 2.5 Flash Preview 04-17 ({\color{green}\geminib{}}).
  }
  \label{tab:validation}
  \newcommand{\limer}[1]{\multicolumn{1}{c}{#1}}
  \begin{tabularx}{\columnwidth}{l X X X X}
    \midrule
    & \multicolumn{2}{c}{Value Functions} & \multicolumn{2}{c}{Policies} \\
    Domain & \limer{semantic} & \limer{symbolic} & \limer{semantic} & \limer{symbolic} \\
    \midrule
    Blocksworld & \geminibCell{} & \geminibCell{} & \geminibCell{} & \geminiaCell{} \\
    Childsnack & \dsrCell{} & \geminibCell{} & \dsrCell{} & \dsrCell{} \\
    Ferry & \dsrCell{} & \dsrCell{} & \geminibCell{} & \geminibCell{} \\
    Floortile & \geminibCell{} & \geminibCell{} & \dsrCell{} & \dsrCell{} \\
    Miconic & \geminiaCell{} & \dsrCell{} & \geminiaCell{} & \geminibCell{} \\
    Rovers & \geminibCell{} & \dsrCell{} & \geminibCell{} & \geminiaCell{} \\
    Satellite & \geminibCell{} & \geminibCell{} & \geminibCell{} & \geminibCell{} \\
    Sokoban & \geminibCell{} & \geminibCell{} & \dsrCell{} & \geminibCell{} \\
    Spanner & \geminiaCell{} & \dsrCell{} & \geminiaCell{} & \geminibCell{} \\
    Transport & \dsrCell{} & \geminibCell{} & \geminiaCell{} & \dsrCell{} \\
    \bottomrule
  \end{tabularx}
\end{table}

\section{LM Program Generation Times}\label{sec:generation_time}
\begin{table}[h!]
  \caption{
    Time taken in seconds for LMs in \Cref{tab:validation} to generate a program.
  }
  \label{tab:times}
  \newcommand{\limer}[1]{\multicolumn{1}{c}{#1}}
  \begin{tabularx}{\columnwidth}{l Y Y Y Y}
    \midrule
    & \multicolumn{2}{c}{Value Functions} & \multicolumn{2}{c}{Policies} \\
    Domain & \limer{semantic} & \limer{symbolic} & \limer{semantic} & \limer{symbolic} \\
    \midrule
    Blocksworld & \geminibVal{78.0} & \geminibVal{108.2} & \geminibVal{107.4} & \geminiaVal{3.4} \\
    Childsnack & \dsrVal{329.8} & \geminibVal{119.7} & \dsrVal{260.2} & \dsrVal{277.3} \\
    Ferry & \dsrVal{310.6} & \dsrVal{419.2} & \geminibVal{42.6} & \geminibVal{168.5} \\
    Floortile & \geminibVal{118.4} & \geminibVal{104.5} & \dsrVal{528.2} & \dsrVal{474.2} \\
    Miconic & \geminiaVal{1.5} & \dsrVal{375.6} & \geminiaVal{4.8} & \geminibVal{98.2} \\
    Rovers & \geminibVal{50.8} & \dsrVal{331.0} & \geminibVal{98.6} & \geminiaVal{3.0} \\
    Satellite & \geminibVal{93.7} & \geminibVal{102.1} & \geminibVal{109.8} & \geminibVal{151.0} \\
    Sokoban & \geminibVal{65.7} & \geminibVal{152.3} & \dsrVal{527.7} & \geminibVal{33.0} \\
    Spanner & \geminiaVal{2.9} & \dsrVal{477.3} & \geminiaVal{2.2} & \geminibVal{60.3} \\
    Transport & \dsrVal{434.2} & \geminibVal{113.7} & \geminiaVal{3.6} & \dsrVal{394.7} \\
    \bottomrule
  \end{tabularx}
\end{table}

\clearpage
\section{Correlation Between Validation and Test Performance}\label{app:corr}
We perform a statistical correlation analysis between the \ref{ssec:valmetric} on validation problems and coverage on testing problems. Specifically, for each benchmark domain and LM model, we have multiple \heuristicMode{}/\policyMode{} programs which we run heuristic search/policy execution on 11 small planning problems (every 9th problem in the training split) and 10 testing problems (every 9th problem in the testing split). These experiments were performed after the aforementioned experiments to prevent overfitting to the test set. \Cref{fig:correlation} illustrates the results.

Interestingly, for both value functions and policies, there is a statistically significant ($p < 0.05$) positive Pearson correlation ($\rho \simeq 1$) between the validation metric and test performance, which suggests that validation sets provide a useful proxy for estimating testing performance. However, the correlation is not perfect as we noted that the programs with the best validation scores did not provide the best overall coverage. The correlation remains high when conditioning on individual domains for where enough unique samples were provided, with an exception being Childsnack policies. We lastly note that performing the validation procedure for the \portfolio{} configuration correctly identifies whether value functions or policies perform better on each domain, a priori to running the test-time experiments.

% !TEX root =llmplan.tex

\begin{figure}[h!]
  \centering
  \caption{
    \emph{Left}: correlation coefficients conditioned on domain.
    \emph{Right}: average coverage ($y$-axis) vs. \ref{ssec:valmetric} ($x$-axis) for LM-generated programs.
  }
  \setlength{\tabcolsep}{1pt}
  \raisebox{2.2\height}{
    \begin{threeparttable}
      \scriptsize
      \begin{tabularx}{0.5\textwidth}{@{} l *{10}{Y} @{}}
        \toprule
        & \bl & \ch & \fe & \fl & \mi & \ro & \sa & \so & \sp & \tr \\
        \midrule
        \heuristicMode{}
        & 1.0 & 0.8 & 1.0 & \zerocell{} & 1.0 & 1.0 & 1.0 & 1.0 & 1.0 & 1.0 \\
        \policyMode{}
        & 1.0 & 0.4 & 1.0 & \zerocell{} & 1.0 & 0.8 & 0.9 & \zerocell{} & 1.0 & 1.0 \\
        \bottomrule
      \end{tabularx}%
      \begin{tablenotes}
      \item Cells with \zerocell{} values indicate that there were fewer than 2 unique datapoints for each domain.
      \end{tablenotes}
    \end{threeparttable}
  }
  \includegraphics[width=0.45\textwidth]{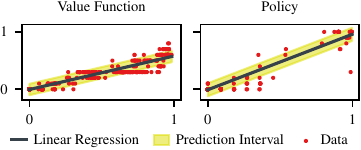}
  \label{fig:correlation}
\end{figure}

\end{document}